\documentclass{article}





\usepackage[nonatbib,preprint]{neurips_2018}

\usepackage[utf8]{inputenc} 
\usepackage[T1]{fontenc}    
\usepackage{hyperref}       
\usepackage{url}            
\usepackage{booktabs}       
\usepackage{amsfonts}       
\usepackage{nicefrac}       
\usepackage{microtype}      
\usepackage{booktabs}       
\usepackage{amsfonts}       
\usepackage{microtype}      
\usepackage{xcolor}
\usepackage{url}
\usepackage{verbatim} 
\usepackage{graphicx}
\usepackage{caption} 
\usepackage{multirow}
\usepackage[linesnumbered,ruled]{algorithm2e}
\usepackage{xspace}
\usepackage{epsfig}
\usepackage{amsmath}
\usepackage{amsthm}
\usepackage{amssymb}
\usepackage{times}
\usepackage{xr}
\usepackage{bbm}
\usepackage{bm}
\usepackage{subcaption}
\usepackage{enumitem}
\usepackage{hyperref}       

\usepackage{enumitem}
\setlist[enumerate]{noitemsep, topsep=0.5\topsep}
\setlist[description]{noitemsep, topsep=0.5\topsep}
\setlist[itemize]{noitemsep, topsep=0.5\topsep}

\newcommand{\note}[1]{{{\textcolor{blue}{[note: #1]}}}}
\newcommand{\jure}[1]{{{\textcolor{red}{[Jure: #1]}}}}
\newcommand{\rex}[1]{{{\textcolor{magenta}{[Rex: #1]}}}}
\newcommand{\chris}[1]{{{\textcolor{orange}{[Christopher: #1]}}}}

\newcommand{\will}[1]{{{\textcolor{purple}{[Will: #1]}}}}

\newcommand{\xhdr}[1]{{\noindent\bfseries #1}.}

\newcommand{\name}{\textsc{DiffPool}\xspace}
\newcommand{\gnn}{\textrm{GNN}}

\newcommand{\R}{\mathbb{R}}

\newcommand{\cut}[1]{}

\newtheorem{proposition}{Proposition}

\DeclareMathOperator*{\argmax}{argmax}

\usepackage{pifont}

\newcommand{\etal}{\textit{et al}.~}

\usepackage{todonotes}
\usepackage{enumitem}

\title{Hierarchical Graph Representation Learning with Differentiable Pooling}

%

\author{
Rex Ying\\
\texttt{rexying@stanford.edu} \\
Stanford University
\And 
Jiaxuan You\\
\texttt{jiaxuan@stanford.edu} \\
Stanford University
\And 
Christopher Morris \\
\texttt{christopher.morris@udo.edu} \\
TU Dortmund University
\And 
Xiang Ren\\
\texttt{xiangren@usc.edu} \\
University of Southern California
\And 
William L. Hamilton\\
\texttt{wleif@stanford.edu} \\
Stanford University
\And
Jure Leskovec\\
\texttt{jure@cs.stanford.edu} \\
Stanford University
}

\begin{document}

\maketitle

\begin{abstract}
Recently, graph neural networks (GNNs) have revolutionized the field of graph representation learning through effectively learned node embeddings, and achieved state-of-the-art results in tasks such as node classification and link prediction. However, current GNN methods are inherently {\em flat} and do not learn {\em hierarchical} representations of graphs---a limitation that is especially problematic for the task of graph classification, where the goal is to predict the label associated with an entire graph.
Here we propose \name, a differentiable graph pooling module that can generate hierarchical representations of graphs and can be combined with various graph neural network architectures in an end-to-end fashion. 
\name learns a differentiable soft cluster assignment for nodes at each layer of a deep GNN, mapping nodes to a set of clusters, which then form the coarsened input for the next GNN layer.
Our experimental results show that combining existing GNN methods with \name\ yields an average improvement of $5$--$10\%$ accuracy on graph classification benchmarks, compared to all existing pooling approaches, achieving a new state-of-the-art on four out of five benchmark data sets.   
\end{abstract}
\section{Introduction}
\label{sec:intro}
In recent years there has been a surge of interest in developing graph neural networks (GNNs)---general deep learning architectures that can operate over graph structured data, such as social network data \cite{hamilton2017inductive,kipf2017semi,Vel+2018} or graph-based representations of molecules \cite{dai2016discriminative,Duv+2015,Gil+2017}.
The general approach with GNNs is to view the underlying graph as a computation graph and learn neural network primitives that generate individual node embeddings by passing, transforming, and aggregating node feature information across the graph~\cite{Gil+2017,hamilton2017inductive}.
%
The generated node embeddings can then be used as input to any differentiable prediction layer, e.g., for node classification \cite{hamilton2017inductive} or link prediction \cite{Sch+2017}, and the whole model can be trained in an end-to-end fashion. 


However, a major limitation of current GNN architectures is that they are inherently {\em flat} as they only propagate information across the edges of the graph and are unable to infer and aggregate the information in a \textit{hierarchical} way. 
For example, in order to successfully encode the graph structure of organic molecules, one would ideally want to encode the local molecular structure (e.g., individual atoms and their direct bonds) as well as the coarse-grained structure of the molecular graph (e.g., groups of atoms and bonds representing functional units in a molecule).
%
This lack of hierarchical structure is especially problematic for the task of graph classification, where the goal is to predict the label associated with an \textit{entire graph}. When applying GNNs to graph classification, the standard approach is to generate embeddings for all the nodes in the graph and then to {\em globally pool} all these node embeddings together, e.g., using a simple summation or neural network that operates over sets \cite{dai2016discriminative,Duv+2015,Gil+2017,Li+2016}. This global pooling approach ignores any hierarchical structure that might be present in the graph, and it prevents researchers from building effective GNN models for predictive tasks over entire graphs.

Here we propose \name, a differentiable graph pooling module that can be adapted to various graph neural network architectures in an hierarchical and end-to-end fashion (Figure~\ref{fig:illustration}). 
\name allows for developing deeper GNN models that can learn to operate on hierarchical representations of a graph. We develop a graph analogue of the spatial pooling operation in CNNs \cite{krizhevsky2012imagenet}, which allows for deep CNN architectures to iteratively operate on coarser and coarser representations of an image. The challenge in the GNN setting---compared to standard CNNs---is that graphs contain no natural notion of spatial locality, i.e., one cannot simply pool together all nodes in a ``$m \times m$ patch'' on a graph, because the complex topological structure of graphs precludes any straightforward, deterministic definition of a ``patch''. Moreover, unlike image data, graph data sets often contain graphs with varying numbers of nodes and edges, which makes defining a general graph pooling operator even more challenging.

In order to solve the above challenges, we require a model that learns how to cluster together nodes to build a hierarchical multi-layer scaffold on top of the underlying graph. 
Our approach \name learns a differentiable soft assignment at each layer of a deep GNN, mapping nodes to a set of clusters based on their learned embeddings. 
In this framework, we generate deep GNNs by ``stacking'' GNN layers in a hierarchical fashion (Figure~\ref{fig:illustration}): the input nodes at the layer $l$ GNN module correspond to the clusters learned at the layer $l-1$ GNN module. 
Thus, each layer of \name coarsens the input graph more and more, and \name is able to generate a hierarchical representation of any input graph after training.
We show that \name\ can be combined with various GNN approaches, resulting in an average 7\% gain in accuracy and a new state of the art on four out of five benchmark graph classification tasks. 
Finally,  we show that \name\ can learn interpretable hierarchical clusters that correspond to well-defined communities in the input graphs.
\cut{
\jure{Say much much more about our method, how it works and why it is cool.}

\chris{Maybe add highlevel visualization of the idea.}

\jure{ I strongly agree, we need a Figure 1 with an illustration of the method.
For example, a graph and then a hierarchical multi-layer scaffold on top of it.
Will, you are great at creating good figures. Do you want to give it a try?}
\will{Yes, I will brainstorm with Rex today about a figure and put something together :)}

\chris{Quickly sketched this. I think it somehow explains the high-level idea. Guess the equations are too much.}
}

\begin{figure}[t]\begin{center}
    \includegraphics[scale=0.95]{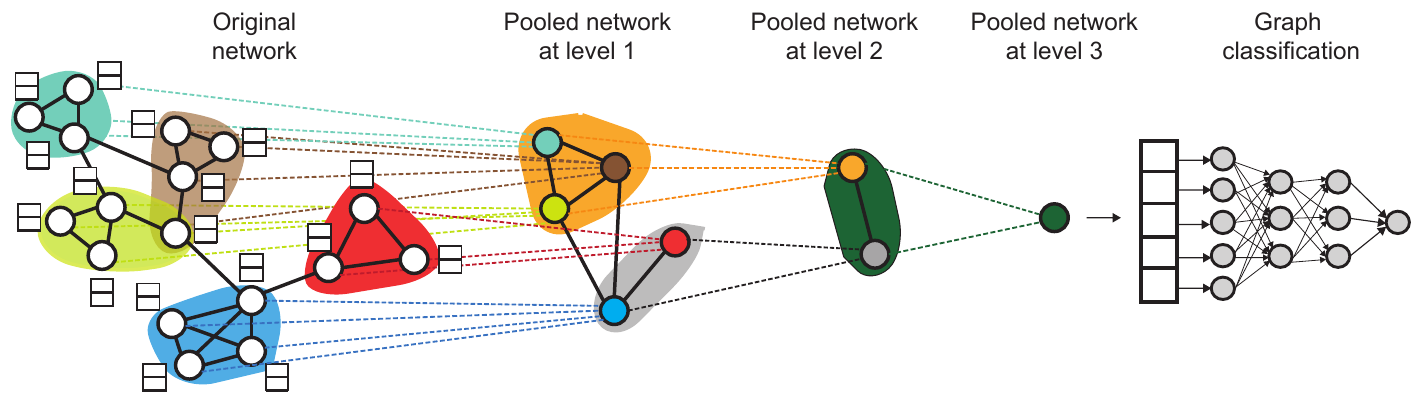}
    \caption{High-level illustration of our proposed method \name. At each hierarchical layer, we run a GNN model to obtain embeddings of nodes. We then use these learned embeddings to cluster nodes together  and run another GNN layer on this coarsened graph. This whole process is repeated for $L$ layers and we use the final output representation to classify the graph. }
    \label{fig:illustration}
    \end{center}
\end{figure}


\section{Related Work}

Our work builds upon a rich line of recent research on graph neural networks and graph classification. 

\xhdr{General graph neural networks}
A wide variety of graph neural network (GNN) models have been proposed in recent years, including methods inspired by convolutional neural networks \cite{Bru+2014,Def+2015,Duv+2015,hamilton2017inductive,kipf2017semi,Lei+2017,niepert2016learning, Vel+2018}, recurrent neural networks \cite{Li+2016}, recursive neural networks~\cite{bianchini2001,Sca+2009} and loopy belief propagation \cite{dai2016discriminative}. 
Most of these approaches fit within the framework of ``neural message passing'' proposed by Gilmer \etal~\cite{Gil+2017}. 
In the message passing framework, a GNN is viewed as a message passing algorithm where node representations are iteratively computed from the features of their neighbor nodes using a differentiable aggregation function.
Hamilton \etal~\cite{Ham+2017a} provides a conceptual review of recent advancements in this area, and Bronstein \etal \cite{bronstein2017geometric} outlines connections to spectral graph convolutions. 

\xhdr{Graph classification with graph neural networks}
GNNs have been applied to a wide variety of tasks, including node classification \cite{hamilton2017inductive,kipf2017semi}, link prediction \cite{kipf2018}, graph classification \cite{dai2016discriminative,Duv+2015,zhang2018end}, and chemoinformatics~\cite{Mer+2005,Lus+2013,Fou+2017,Jin+2018,Sch+2017}. 
In the context of graph classification---the task that we study here---a major challenge in applying GNNs is going from node embeddings, which are the output of GNNs, to a representation of the entire graph. 
Common approaches to this problem include simply summing up or averaging all the node embeddings in a final layer \cite{Duv+2015}, introducing a ``virtual node'' that is connected to all the nodes in the graph \cite{Li+2016}, or aggregating the node embeddings using a deep learning architecture that operates over sets \cite{Gil+2017}.
However, all of these methods have the limitation that they do not learn hierarchical representations (i.e., all the node embeddings are globally pooled together in a single layer), and thus are unable to capture the natural structures of many real-world graphs.
Some recent approaches have also proposed applying CNN architectures to the concatenation of all the node embeddings \cite{niepert2016learning,zhang2018end}, but this requires a specifying (or learning) a canonical ordering over nodes, which is in general very difficult and equivalent to solving graph isomorphism. 

Lastly, there are some recent works that learn hierarchical graph representations by combining GNNs with deterministic graph clustering algorithms \cite{Def+2015,simonovsky2017dynamic,Fey+2018}, following a two-stage approach. However, unlike these previous approaches, we seek to {\em learn} the hierarchical structure in an end-to-end fashion, rather than relying on a deterministic graph clustering subroutine.


\cut{
\rex{need to be removed probably. there are some points i think are important which is not illustrated in related work}
To achieve this task. recent works have made several attempts.
Recent attempts include the use of linearization of graphs. A graph can be linearized to a vector representation using ``canonical ordering'' \cite{niepert2016learning}. Pooling or striding is easy to carry out on the 1D representation. However, it is non-trivial to specify a canonical ordering that also preserves graph structure. For example, nodes that are distant in graph might be adjacent to each other in the linearized representation.

Another intuitive pooling strategy is to pool using a graph clustering or coarsening strategy \cite{safro2015advanced}, which can be applied hierarchically and produce coarser and coarser graphs.
However, a variety of clustering and coarsening strategies have been designed, and significant hand-engineering is required to find the best strategy for a specific classification task at hand.
}

\cut{
\xhdr{Graph kernels}

In recent years, various graph kernels have been proposed, which
implicitly or explicitly map graphs to a Hilbert space. Gärtner \etal\cite{Gae+2003}
and Kashima \etal~\cite{Kas+2003} simultaneously proposed
graph kernels based on random walks, which count the number of walks
two graphs have in common. Since then, random walk kernels have been
studied intensively, e.g., see~\cite{Sug+2015}.
Kernels based on shortest paths were first proposed in~\cite{Borgwardt2005}.

A different line in the development of graph kernels focused
particularly on scalable graph kernels. These kernels are typically
computed efficiently by explicit feature maps, which allow to bypass
the computation of a gram matrix, and allow applying scalable linear
classification algorithms. Prominent examples are kernels based on
subgraphs up to a fixed size, so called
{graphlets}~\cite{She+2009}. Other approaches of this category encode
the neighborhood of every node by different techniques, e.g.,
see~\cite{Hid+2009, Neu+2016}, and most notably the
Weisfeiler-Lehman subtree kernel~\cite{She+2011} and its higher-order
variants~\cite{Mor+2017}. Subgraph and Weisfeiler-Lehman kernels have
been successfully employed within frameworks for smoothed and deep
graph kernels~\cite{Yan+2015,Yan+2015a}. Recent developments include
assignment-based approaches~\cite{kriege2016valid,Nik+2017,Joh+2015} and
spectral approaches~\cite{Kon+2016}. Although graph kernels have shown good performance in the past, they lack the ability to adapt to they give data distribution at hand, since they mostly rely on precomputed features. Moreover, they may not scale to large data sets due to the quadratic overhead to compute the gram matrix. 

Niepert \etal~\cite{niepert2016learning} extracted canonical vector representations of neighborhoods of nodes,  based on heuristics such as the Weisfeiler-Lehman algorithm~\cite{She+2011}, to compute a graph embeddings and then used neural networks for the classification task. Moreover, recently graph neural networks (GNN) for node and graph classification became popular. Most of these
approaches fit into the framework proposed by~\cite{Gil+2017}. Here a GNN is viewed as a message passing approach where node
features are iteratively computed from the features of the
node's neighbors by using a differentialable neighborhood aggregation function. The parameters of this function are 
learned together with the parameters of the neural network used for the
classification task in an end-to-end fashion. Up to now, naïve global mean pooling or precomputed clustering methods were applied to compute graph embeddings from the features of each single node.

Notable instances of this model include Neural
Fingerprints~\cite{Duv+2015}, Gated Graph Neural
Networks~\cite{Li+2016}, and spectral approaches proposed
in~\cite{Bru+2014,Def+2015,kipf2017semi}. Dai \etal\cite{dai2016discriminative}
proposed an approach inspired by mean-field inference
and Lei \etal\cite{Lei+2017} showed that the generated
feature maps lie in the same Hilbert space as some popular graph
kernels and successfully applied them in the domain of
chemoinformatics~\cite{Jin+2018}. In~\cite{simonovsky2017dynamic} GNN were extended to include edge features and various precomputed pooling heuristics based on clustering methods were applied. Attention-based extensions were explored in~\cite{Vel+2018}. In order to make GNNs scale to large graphs Hamilton \etal\cite{hamilton2017inductive} and Chen \etal\cite{Che+2018} devised stochastic versions of GNNs. Recently, a differentiable pooling mechanism to compute graph embeddings based on node features using differentiable sorting was proposed~\cite{zhang2018end}.

Moreover, GNNs were applied for protein-protein
interaction prediction~\cite{Fou+2017} and quantum interactions in
molecules~\cite{Sch+2017}. An approach for unsupervised learning based
on GNNs was presented in~\cite{Gar+2017}. Early
approaches were published in~\cite{Mer+2005} and~\cite{Sca+2009,
Sca+2009a}. A survey can be found in~\cite{Ham+2017a}.

}

\section{Proposed Method}
\label{sec:proposed}

The key idea of \name is that it enables the construction of deep, multi-layer GNN models by providing a differentiable module to hierarchically pool graph nodes. 
In this section, we outline the \name module and show how it is applied in a deep GNN architecture.


\subsection{Preliminaries}
\label{sec:setting}

We represent a graph $G$ as $(A,F)$, where $A\in \{0, 1\}^{n\times n}$ is the adjacency matrix, and $F \in \mathbb{R}^{n\times d}$ is the node feature matrix assuming each node has $d$ features.\footnote{We do not consider edge features, although one can easily extend the algorithm to support edge features using techniques introduced in \cite{simonovsky2017dynamic}.}
Given a set of labeled graphs $\mathcal{D}=\{(G_1,y_1),(G_2,y_2),...\}$ where $y_i \in \mathcal{Y}$ is the label corresponding to graph $G_i \in \mathcal{G}$, the goal of graph classification is to learn a mapping $f : \mathcal{G} \to \mathcal{Y}$ that maps graphs to the set of labels. 
The challenge---compared to standard supervised machine learning setup---is that we need a way to extract useful feature vectors from these input graphs.
That is, in order to apply standard machine learning methods for classification, e.g., neural networks, we need a procedure to convert each graph to an finite dimensional vector in $\mathbb{R}^D$.


\xhdr{Graph neural networks}
In this work, we build upon graph neural networks in order to learn useful representations for graph classification in an end-to-end fashion. 
In particular, we consider GNNs that employ the following general ``message-passing'' architecture:
\begin{equation}\label{eq:gnn}
     H^{(k)} = M(A,H^{(k-1)};\theta^{(k)}),
\end{equation}
where $H^{(k)} \in \mathbb{R}^{n \times d}$ are the node embeddings (i.e., ``messages'') computed after $k$ steps of the GNN and $M$ is the message propagation function, which depends on the adjacency matrix, trainable parameters $\theta^{(k)}$, and the node embeddings $H^{(k-1)}$ generated from the previous message-passing step.\footnote{For notational convenience, we assume that the embedding dimension is $d$ for all $H^{(k)}$; however, in general this restriction is not necessary.}
The input node embeddings $H^{(0)}$ at the initial message-passing iteration $(k=1)$, are initialized using
the node features on the graph, $H^{(0)} = F$. 

There are many possible implementations of the propagation function $M$ \cite{Gil+2017,hamilton2017inductive}.
For example, one popular variant of GNNs---Kipf's \etal \cite{kipf2017semi} Graph Convolutional Networks (GCNs)---implements $M$ using a combination of linear transformations and ReLU non-linearities:
\begin{equation}
    \label{eq:gcn}
    H^{(k)} =  M(A,H^{(k-1)};W^{(k)}) = \textrm{ReLU}(\tilde{D}^{-\frac{1}{2}}\tilde{A}\tilde{D}^{-\frac{1}{2}} H^{(k-1)} W^{(k-1)}),
\end{equation}
where $\tilde{A}=A+I$, $\tilde{D}=\sum_j\tilde{A}_{ij}$ and $W^{(k)} \in \mathbb{R}^{d \times d}$ is a trainable weight matrix.
The differentiable pooling model we propose can be applied to any GNN model implementing Equation \eqref{eq:gnn}, and is agnostic with regards to the specifics of how $M$ is implemented. 

A full GNN module will run $K$ iterations of Equation \eqref{eq:gnn} to generate the final output node embeddings $Z=H^{(K)} \in \mathbb{R}^{n \times d}$, where $K$ is usually in the range 2-6.
For simplicity, in the following sections we will abstract away the internal structure of the GNNs and use $Z = \gnn(A, X)$ to denote an arbitrary GNN module implementing $K$ iterations of message passing according to some adjacency matrix $A$ and initial input node features $X$. 

\xhdr{Stacking GNNs and pooling layers}
GNNs implementing Equation \eqref{eq:gnn} are inherently flat, as they only propagate information across edges of a graph.
The goal of this work is to define a general, end-to-end differentiable strategy that allows one to {\em stack} multiple GNN modules in a hierarchical fashion. 
Formally, given $Z = \gnn(A, X)$, the output of a GNN module, and a graph adjacency matrix $A \in \mathbb{R}^{n \times n}$, we seek to define a strategy to output a new coarsened graph containing $m<n$ nodes, with weighted adjacency matrix $A^{'} \in \R^{m \times m}$ and node embeddings $Z^{'} \in \mathbb{R}^{m\times d}$.
This new coarsened graph can then be used as input to another GNN layer, and this whole process can be repeated $L$ times, generating a model with $L$ GNN layers that operate on a series of coarser and coarser versions of the input graph (Figure~\ref{fig:illustration}).
Thus, our goal is to learn how to cluster or pool together nodes using the output of a GNN, so that we can use this coarsened graph as input to another GNN layer.
What makes designing such a pooling layer for GNNs especially challenging---compared to the usual graph coarsening task---is that our goal is not to simply cluster the nodes in one graph, but to provide a general recipe to hierarchically pool nodes across a broad set of input graphs. 
That is, we need our model to learn a pooling strategy that will generalize across graphs with different nodes, edges, and that can adapt to the various graph structures during inference. 

\cut{
\chris{I think the paragraph below repeats some of the points of the prev. paragraph}
The key challenge for graph pooling with GNNs is designing a pooling layer that respects the hierarchical structure of the input graph.
Similar to how CNNs on images stack filters with increasingly large receptive fields, the pooling layers in a GNN should be stacked hierarchically, extracting coarser and coarser subgraph structures to allow the GNN to obtain a more global view of the graph at the final layers. 
Moreover, what makes designing a pooling layer for GNNs especially challenging---compared to the usual graph coarsening task---is that our goal is not to simply cluster the nodes in one graph, but to provide a general recipe to hierarchically pool nodes across a broad set of input graphs. 
That is, we need our model to encode a pooling strategy that will generalize across graphs with different nodes, edges, and that can adapt to the various graph structures during inference. 
Finally, a key desideratum of a pooling module is that we want it to be able to {\em learn} a good strategy from the training data, rather than relying on deterministic graph coarsening functions. 
For instance, one could simply use edge contractions or non-negative matrix factorization to assign nodes to clusters, but these approaches are incapable of adapting their pooling strategy based on training data. 
}


\subsection{Differentiable Pooling via Learned Assignments}
\cut{
\jure{This is our main contribution. We need to discuss why our approach is sound and what is the intuition behind it.
It would be good to talk about possible strategies to learn this (factorization, etc.) but why our approach of learning to assign nodes to clusters is appropriate one.
We need a model that can for any network structure learn what is the right hierarchical pooling structure in order to aggregate information. 
I think such discussion is important as it let's reader to think and appreciate the problem. If we simply introduce the solution too quickly people won't appreciate it.}
\will{I tried to add more motivation in the paragraph above, and Rex and I are trying to add in more intuition behind the method in the paragraphs below.}
}

Our proposed approach, \name, addresses the above challenges by learning a cluster assignment matrix over the nodes using the output of a GNN model.
The key intuition is that we stack $L$ GNN modules and learn to assign nodes to clusters at layer $l$ in an end-to-end fashion, using embeddings generated from a GNN at layer $l-1$.
Thus, we are using GNNs to both extract node embeddings that are useful for graph classification, as well to extract node embeddings that are useful for hierarchical pooling.
Using this construction, the GNNs in \name learn to encode a general pooling strategy that is useful for a large set of training graphs. 
We first describe how the \name\ module pools nodes at each layer given an assignment matrix; following this, we discuss how we generate the assignment matrix using a GNN architecture. 

\xhdr{Pooling with an assignment matrix}
We denote the learned cluster assignment matrix at layer $l$ as $S^{(l)} \in \mathbb{R}^{n_l \times n_{l+1}}$. 
Each row of $S^{(l)}$ corresponds to one of the $n_l$ nodes (or clusters) at layer $l$, and each column of $S^{(l)}$ corresponds to one of the $n_{l+1}$ clusters at the next layer $l+1$. 
Intuitively, $S^{(l)}$ provides a soft assignment of each node at layer $l$ to a cluster in the next coarsened layer $l+1$.

Suppose that $S^{(l)}$ has already been computed, i.e., that we have computed the assignment matrix at the $l$-th layer of our model.
We denote the input adjacency matrix at this layer as $A^{(l)}$ and denote the input node embedding matrix at this layer as $Z^{(l)}$.
Given these inputs, the \name layer $(A^{(l+1)},X^{(l+1)}) = \textsc{DiffPool}(A^{(l)},Z^{(l)})$ coarsens the input graph, generating a new coarsened adjacency matrix $A^{(l+1)}$ and a new matrix of embeddings $X^{(l+1)}$ for each of the nodes/clusters in this coarsened graph.
In particular, we apply the two following equations:
\begin{align}
\label{eq:4}
&X^{(l+1)} = {S^{(l)}}^T Z^{(l)}\in \mathbb{R}^{n_{l+1} \times d},\\
\label{eq:5}
&A^{(l+1)} = {S^{(l)}}^T A^{(l)}{S^{(l)}} \in \mathbb{R}^{n_{l+1} \times n_{l+1}}.
\end{align}
Equation \eqref{eq:4} takes the node embeddings $Z^{(l)}$ and aggregates these embeddings according to the cluster assignments $S^{(l)}$, generating embeddings for each of the $n_{l+1}$ clusters.
Similarly, Equation \eqref{eq:5} takes the adjacency matrix $A^{(l)}$ and generates a coarsened adjacency matrix denoting the connectivity strength between each pair of clusters. 

Through Equations \eqref{eq:4} and \eqref{eq:5}, the \name layer coarsens the graph: the next layer adjacency matrix $A^{(l+1)}$ represents a coarsened graph with $n_{l+1}$ nodes or {\em cluster nodes}, where each individual cluster node in the new coarsened graph corresponds to a cluster of nodes in the graph at layer $l$.
Note that $A^{(l+1)}$ is a real matrix and represents a fully connected edge-weighted  graph; each entry $A^{(l+1)}_{ij}$ can be viewed as the connectivity strength between cluster $i$ and cluster $j$. 
Similarly, the $i$-th row of $X^{(l+1)}$ corresponds to the embedding of cluster $i$. 
Together, the coarsened adjacency matrix $A^{(l+1)}$ and cluster embeddings $X^{(l+1)}$ can be used as input to another GNN layer, a process which we describe in detail below.  

\xhdr{Learning the assignment matrix}
In the following we describe the architecture of \name, i.e., how \name\ generates the assignment matrix $S^{(l)}$ and embedding matrices $Z^{(l)}$ that are used in Equations \eqref{eq:4} and \eqref{eq:5}.
We generate these two matrices using two separate GNNs that are both applied to the input cluster node features $X^{(l)}$ and coarsened adjacency matrix $A^{(l)}$.
The {\em embedding GNN} at layer $l$ is a standard GNN module applied to these inputs:
\begin{align}\label{eq:embedgnn}
   Z^{(l)} = \gnn_{l, \textrm{embed}}(A^{(l)}, X^{(l)}),
\end{align}
i.e., we take the adjacency matrix between the cluster nodes at layer $l$ (from Equation \ref{eq:5}) and the pooled features for the clusters (from Equation \ref{eq:4}) and pass these matrices through a standard GNN to get new embeddings $Z^{(l)}$ for the cluster nodes. 
In contrast, the {\em pooling GNN} at layer $l$, uses the input cluster features $X^{(l)}$ and cluster adjacency matrix $A^{(l)}$ to generate an assignment matrix:
\begin{align}\label{eq:poolgnn}
    S^{(l)} = \textrm{softmax}\left(\gnn_{l,\text{pool}}(A^{(l)}, X^{(l)})\right),
\end{align}
where the softmax function is applied in a row-wise fashion.
The output dimension of $\gnn_{l,\text{pool}}$ corresponds to a pre-defined maximum number of clusters in layer $l$, and is a hyperparameter of the model.

Note that these two GNNs consume the same input data but have distinct parameterizations and play separate roles:
The embedding GNN generates new embeddings for the input nodes at this layer, while the pooling GNN generates a probabilistic assignment of the input nodes to $n_{l+1}$ clusters.

In the base case, the inputs to Equations \eqref{eq:embedgnn} and Equations \eqref{eq:poolgnn} at layer $l=0$ are simply the input adjacency matrix $A$ and the node features $F$ for the original graph.
At the penultimate layer $L-1$ of a deep GNN model using \name, we set the assignment matrix $S^{(L-1)}$ be a vector of $1$'s, i.e., all nodes at the final layer $L$ are assigned to a single cluster, generating a final embedding vector corresponding to the entire graph.
This final output embedding can then be used as feature input to a differentiable classifier (e.g., a softmax layer), and the entire system can be trained end-to-end using stochastic gradient descent. 


\xhdr{Permutation invariance}
Note that in order to be useful for graph classification, the pooling layer should be invariant under node permutations. For \name we get the following positive result, which shows that any deep GNN model based on \name\ is permutation invariant, as long as the component GNNs are permutation invariant. 
\begin{proposition}
\label{prop:permute}
Let $P\in \{0,1\}^{n\times n}$ be any permutation matrix, then $\text{\sc \name}(A, Z) = \text{\sc \name}(PAP^T,PX)$ as long as $\gnn(A, X) = \gnn(PAP^T, X)$ (i.e., as long as the GNN method used is permutation invariant).
\end{proposition}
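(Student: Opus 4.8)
The plan is to track how every matrix computed inside a \name layer transforms when the input graph is relabeled by a permutation matrix $P$, and then to propagate these transformations through the $L$ stacked layers. Since the two GNNs inside a layer are assumed to commute with relabeling in the appropriate sense, the only new ingredient needed is that the aggregation Equations~\eqref{eq:4}--\eqref{eq:5} absorb the permutation completely, so that the coarsened graph handed to the next layer is already canonical.

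First I would record the one elementary fact needed in addition to the hypothesis on $\gnn$: the row-wise softmax commutes with row permutations, i.e.\ $\mathrm{softmax}(PM) = P\,\mathrm{softmax}(M)$ for every matrix $M$, because permuting the rows of $M$ merely permutes the independently computed rows of the output. Reading the hypothesis on $\gnn$ as node-level equivariance, $\gnn(PAP^{T}, PX) = P\,\gnn(A,X)$, and applying it to the two GNNs of layer $l$ evaluated on the relabeled input $(PA^{(l)}P^{T},\, PX^{(l)})$, the embedding matrix of Equation~\eqref{eq:embedgnn} and the assignment matrix of Equation~\eqref{eq:poolgnn} transform as
\begin{align*}
Z^{(l)} \;\mapsto\; P Z^{(l)} \qquad\text{and}\qquad S^{(l)} \;\mapsto\; P S^{(l)}.
\end{align*}
Here $P$ acts only on the \emph{rows} of $S^{(l)}$, which are indexed by the layer-$l$ nodes; its columns are indexed by the layer-$(l+1)$ clusters, whose number is a fixed architectural hyperparameter and hence independent of the input labeling.

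Next I would substitute these transformed quantities into Equations~\eqref{eq:4} and~\eqref{eq:5}, and use $P^{T}P = I$:
\begin{align*}
X^{(l+1)} &\;\mapsto\; (P S^{(l)})^{T}(P Z^{(l)}) = {S^{(l)}}^{T} P^{T} P\, Z^{(l)} = {S^{(l)}}^{T} Z^{(l)} = X^{(l+1)},\\
A^{(l+1)} &\;\mapsto\; (P S^{(l)})^{T}(P A^{(l)} P^{T})(P S^{(l)}) = {S^{(l)}}^{T} A^{(l)} S^{(l)} = A^{(l+1)}.
\end{align*}
Thus the pair $(A^{(l+1)},X^{(l+1)})$ produced by one \name layer is left exactly invariant by a relabeling of that layer's input. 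The base case is this same computation applied to the initial inputs $(A^{(0)},X^{(0)}) = (A,F)$ relabeled to $(PAP^{T},PF)$. A one-line induction on $l$ then finishes the argument: layer $0$ already outputs permutation-invariant matrices, so every later layer receives an input that no longer depends on the original labeling, and therefore the final graph-level embedding $X^{(L)}$ and the downstream classifier are unchanged --- which is the asserted identity $\name(A,Z) = \name(PAP^{T},PX)$.

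I expect the only genuinely delicate point to be the bookkeeping of what $P$ acts on. One must invoke that the component GNNs are equivariant with respect to the node \emph{labeling} (so the intermediate embeddings and the pooling logits pick up a left factor of $P$), rather than invariant in some weaker aggregate sense; and one must observe that the target-cluster index of $S^{(l)}$ is never permuted, which is precisely what lets the two occurrences of $P$ in Equations~\eqref{eq:4}--\eqref{eq:5} cancel cleanly instead of leaving a residual permutation of the clusters. Everything else reduces to the $P^{T}P = I$ cancellations displayed above.
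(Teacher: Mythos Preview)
Your proof is correct and follows essentially the same approach as the paper's: invoke the permutation assumption on the component GNNs so that $Z^{(l)}\mapsto PZ^{(l)}$ and $S^{(l)}\mapsto PS^{(l)}$, then cancel via $P^{T}P=I$ in Equations~\eqref{eq:4}--\eqref{eq:5}. The paper's own proof is a two-sentence version of this, omitting your explicit softmax-commutation remark and the layer-wise induction, but the core argument is identical.
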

\begin{proof}
Equations \eqref{eq:embedgnn} and \eqref{eq:poolgnn} are permutation invariant by the assumption that the GNN module is permutation invariant. 
And since any permutation matrix is orthogonal, applying $P^T P=I$ to Equation (\ref{eq:4}) and (\ref{eq:5}) finishes the proof.
\end{proof}

\subsection{Auxiliary Link Prediction Objective and Entropy Regularization}

In practice, it can be difficult to train the pooling GNN (Equation \ref{eq:5}) using only gradient signal from the graph classification task.
Intuitively, we have a non-convex optimization problem and it can be difficult to push the pooling GNN away from spurious local minima early in training.
To alleviate this issue, we train the pooling GNN with an auxiliary link prediction objective, which encodes the intuition that nearby nodes should be pooled together. 
\cut{
Controlling input features and representation dimension is still insufficient for $g_\phi$ to learn and extract meaningful cluster information from a graph. We additionally supply side objectives as regularizations for $g_\phi$. 
Intuitively, $g_\phi$ should learn to assign nodes that are close together in terms of connectivity into the same clusters. 
Hence, we use a link prediction objective to further encourage similarity of cluster assignments.}
In particular, at each layer $l$, we minimize
$L_{\text{LP}} = ||A^{(l)}, S^{(l)} S^{{(l)}^T}||_F$, where $||\cdot||_F$ denotes the Frobenius norm. 
Note that the adjacency matrix $A^{(l)}$ at deeper layers is a function of lower level assignment matrices, and changes during training. 

Another important characteristic of the pooling GNN (Equation \ref{eq:5})  is that the output cluster assignment for each node should generally be close to a one-hot vector, so that the membership for each cluster or subgraph is clearly defined. 
We therefore regularize the entropy of the cluster assignment by minimizing $L_{\text{E}} = \frac{1}{n} \sum_{i=1}^n H(S_i)$, where $H$ denotes the entropy function, and $S_i$ is the $i$-th row of $S$.

During training, $L_{\text{LP}}$ and $L_{\text{E}}$ from each layer are added to the classification loss.
In practice we observe that training with the side objective takes longer to converge, but nevertheless achieves better performance and more interpretable cluster assignments.

\cut{
\subsection{Behavior on sparse and dense graphs}
\label{sec:sparse_dense}
The sparsity of (sub)graphs has a large impact on the behavior of the assignment layer  $M(A^{(l)},H^{(l)};\phi^{(l)})$.
In particular, we find that \name\ is most effective in sparse graphs that exhibit hierarchical partitions, whereas in very dense (sub)graphs \name\ simply learns to map all nodes to a single cluster.
Moreover, within a particular layer of a deep GNN model, \name\ will tend to collapse densely-connected connected subgraphs into a single hyper-node. 
This trend is supported by our empirical studies (Section \ref{sec:ex})---e.g., where \name\ leads to state-of-the-art results on all data sets except \textsc{Collab}, which involves exceptionally dense graphs---and this trend can be understood based on the following theoretical intuitions. 

First, note that if a subgraph of the input graph is dense, then the entries of the corresponding subgraph adjacency matrix will be mostly ones. 
And since  $\mathbf{1} \mathbf{1}^T$ is a matrix of all ones, an assignment matrix for the subgraph, where one column is a vector of ones and the others are zero vectors, will be close to a minimum of the link prediction auxiliary objective.
Thus, the objective will tend to group all nodes of a dense subgraph into one cluster.

\rex{formalizing: for a graph with n cliques, the network with linkpred objective will almost deterministically assign all nodes in each clique to a distinct cluster}

Moreover, in terms of GNN computation, collapsing dense subgraphs in this way is intuitively an optimal pooling (or partitioning) strategy.
In particular, it is known that GNNs can efficiently perform message-passing on dense, clique-like subgraphs (due to their small diameter) \cite{liao2018graph}, and hence \name\ can pool together nodes in such a dense subgraph without losing any significant structural information.
In contrast, sparse subgraphs may contain many interesting structures, including path-, cycle- and tree-like structures, and given the high-diameter induced by sparsity, GNN message-passing may fail to capture these structures. 
Thus, by separately pooling distinct parts of a sparse subgraph, \name can learn to capture the meaningful structures present 
in sparse graph regions. 
}

\cut{
This interpretation matches our pooling objective well. Intuitively, a clique-like subgraph contains very little interesting structure, and message passing is efficient due to small graph diameter, and hence it could be directly pooled together without losing much structural information in the pooled embedding matrix $Z$. However, a sparse subgraph may contain many interesting structures, including path-, cycle- and tree-like ones. Therefore, message passing may miss these fine-grained structures. Subsequently, each part of the subgraph should be assigned and pooled differently, to capture these meaningful structures. }

\cut{
Hence the objective will tend to group all nodes into one cluster. \chris{We could also formalize it and prove it.}This  implies that all nodes in the dense subgraph are clustered into a single hyper-node. In contrast, when a subgraph is sparse, the resulting assignment tends to have higher entropy, and nodes could be assigned to different clusters, so that the elements of $S S^T$ are small.}
\cut{
\subsection{Learning to Pool with Side Information}
\label{sec:pooling}

Although the assignment matrix $S$ and the embedding matrix $Z$ are both computed using the GCN architecture, they have distinct interpretations. In particular, the embedding matrix at each layer is used as an intermediate representations of nodes and subgraphs at different coarsening scales. In comparison, the assignment matrix at each layer is used to determine the clustering assignment, and determines which nodes and subgraphs should be pooled together. Therefore it is important to create asymmetry in the computation of both $S$ and $Z$, in order to differentiate their purposes. We achieve this in three ways.

\xhdr{Input Features}
\note{note to rex himself: more experim - identity feat input?}
The goal of $Z$ is to learn node and subgraph representations such that when pooled together, subsequent classifier can easily map the representations to labels. 
Since the labels might be a complex function of all node features, e.g., homophily and structural properties of the input graph, we use a variety of features as inputs to $f_\theta$ to compute $Z$, including structural features such as degree and clustering coefficient features, or node features.
In comparison, the assignment network should learn to predict cluster indices mainly based on homophily property. Therefore structural features such as degree and clustering coefficients are removed from the input features, which is essential in obtaining meaningful clusters that also benefit the classification task.

\xhdr{Representation dimension}
At deeper layers, the embedding matrix $Z$ provides representations for hyper-nodes corresponding to larger subgraphs. Therefore, more dimensions should be used when encoding larger subgraphs. This is analogous to CNNs for images~\cite{?}, where the number of channels increases after each convolutional layer.  In contrast, the assignment matrix $S$ aims to map hyper-nodes into a fewer number of clusters, allowing hyper-nodes to gain a more global \rex{what is better wording?} \chris{higher-order?} view of connectivities between subgraphs. Therefore, at deeper layers, the output dimension of $g_\phi$ decreases exponentially. Here the dimension reduction ratio $\alpha$ is a hyper-parameter. In practice, we discover that performance is insensitive for $0.1 < \alpha < 0.5$. The network outputs a sparse $S$ if the number of intuitive clusters are much less than the output dimension.

\xhdr{Auxiliary objectives}
Controlling input features and representation dimension is still insufficient for $g_\phi$ to learn and extract meaningful cluster information from a graph. We additionally supply side objectives as regularizations for $g_\phi$. 
Intuitively, $g_\phi$ should learn to assign nodes that are close together in terms of connectivity into the same clusters. 
Hence, we use a link prediction objective to further encourage similarity of cluster assignments.
Particularly, at each layer $l$, we minimize
$L_{\text{LP}} ||A^{(l)}, S^{(l)} S^{{(l)}^T}||_F$, where $||\cdot||_F$ denotes the Frobinus norm. 
Note that the (soft) adjacency matrix $A^{(l)}$ at deeper layers is a function of lower level assignment matrices, and changes during training. 

Another important characteristic of $g_\phi$ is that the output cluster assignment for each node should generally be close to a one-hot vector, so that the membership for each cluster or subgraph is clearly defined, except in rare cases where a node serves as a bridge between multiple clusters. We therefore regularize the entropy of the cluster assignment by minimizing $L_{\text{E}} = \frac{1}{n} \sum_{i=1}^n H(S_i)$, where $H$ denotes the entropy function, and $S_i$ is the $i$-th row of $S$.

During training, $L_{\text{LP}}$ and $L_{\text{E}}$ from each layer are added to the classification loss, in order to obtain meaningful assignment matrices at all layers. In practice we observe that training with the side objective takes longer to converge, but nevertheless achieves better performance and more interpretable cluster assignment.

\note{Note to Rex himself: try curriculum training}
\chris{Where is $g_\phi$ and $f_\theta$ defined}

\xhdr{Relation to low rank matrix factorization}
We note that by approximating $A$ with $S S^T$, the link prediction auxiliary objective bears close resemblance to a low rank matrix factorization of $A$, and well-separated pair decomposition (WSPD). 
Similar to matrix factorization, we require that at each layer, $S$ is able to capture most of the distance information between nodes, while using less dimensions than that of $A$. However, low rank matrix factorization is non-convex and has many local minimums. When trained end-to-end in the graph classification task, \name tends to find local minimum that is better suited for the task. This explains our observation that DiffPool consistently out-performs a two-step procedure of graph clustering followed by GCN that pools over these clusters.
$\mathrm{WSPD}$ computes small number of pairs of clusters, such that for any pair of points $(p, q)$, we can find a pair of clusters $(A, B)$ such that $p\in A, q\in B$, and $d(p, q) \approx d(A, B)$. In the case of \name, the goal of the auxiliary objective is to let the connectivity strength between any node pair $(p, q)$ to be reflected by the connectivity strength between their corresponding clusters. However, unlike WSPD, the assignment in \name is soft to allow differentiability, which enables end-to-end training and avoids the high computation cost of WSPD in high dimensions.

\xhdr{Behavior on sparse and dense networks}
The sparsity of graphs also affects the behavior of the assignment network $g_\phi$.
Suppose a subgraph of the input graph is dense, therefore the entries of the corresponding subgraph adjacency matrix are mostly ones. Since  $\mathbf{1} \mathbf{1}^T$ is a matrix of all ones, an assignment matrix for the subgraph, where one column is a vector of ones and the others are zero vectors, will be close to a minimum of the link prediction auxiliary objective. Hence the objective will tend to group all nodes into one cluster. \chris{We could also formalize it and prove it.}This  implies that all nodes in the dense subgraph are clustered into a single hyper-node. In contrast, when a subgraph is sparse, the resulting assignment tends to have higher entropy, and nodes could be assigned to different clusters, so that the elements of $S S^T$ are small.

This interpretation matches our pooling objective well. Intuitively, a clique-like subgraph contains very little interesting structure, and message passing is efficient due to small graph diameter, and hence it could be directly pooled together without losing much structural information in the pooled embedding matrix. However, a sparse subgraph may contain many interesting structures, including path-, cycle- and tree-like ones. Therefore, message passing may miss these fine-grained structures. Subsequently, each part of the subgraph should be assigned and pooled differently, to capture these meaningful structures. 

}

\section{Experiments}
\label{sec:ex}

We evaluate the benefits of \name\ against a number of state-of-the-art graph classification approaches, with the goal of answering the following questions:
\begin{enumerate}[leftmargin=20pt]
\item[{\bf Q1}] How does  \name\ compare to other pooling methods proposed for GNNs (e.g., using sort pooling~\cite{zhang2018end} or the \textsc{Set2Set} method \cite{Gil+2017})?
\item[{\bf Q2}] How does  \name\ combined with GNNs compare to the state-of-the-art for graph classification task, including both GNNs and kernel-based methods?
\item[{\bf Q3}] Does  \name  compute meaningful and interpretable clusters on the input graphs?
\end{enumerate}

\xhdr{Data sets}
To probe the ability of \name to learn complex hierarchical structures from graphs in different domains, we evaluate on a variety of relatively large graph data sets chosen from benchmarks commonly used in graph classification \cite{KKMMN2016}. We use protein data sets including \textsc{Enzymes}, \textsc{Proteins}~\cite{Borgwardt2005a, Fer+2013}, \textsc{D\&D}~\cite{Dob+2003}, the social network data set \textsc{Reddit-Multi-12k}~\cite{Yan+2015a}, and the scientific collaboration data set \textsc{Collab}~\cite{Yan+2015a}. See Appendix A for statistics and properties.
For all these data sets, we perform 10-fold cross-validation to evaluate model performance, and report the accuracy averaged over 10 folds. 

\xhdr{Model configurations}
In our experiments, the GNN model used for \name\ is built on top of the \textsc{GraphSage} architecture, as we found this architecture to have superior performance compared to the standard GCN approach as introduced in \cite{kipf2017semi}. 
We use the ``mean'' variant of \textsc{GraphSage}~\cite{hamilton2017inductive} and apply a \name layer after every two \textsc{GraphSage} layers in our architecture.
A total of 2 \name layers are used for the datasets. For small datasets such as \textsc{Enzymes} and \textsc{Collab}, 1 \name layer can achieve similar performance.
After each \name layer, 3 layers of graph convolutions are performed, before the next \name layer, or the readout layer.
The embedding matrix and the assignment matrix are computed by two separate \textsc{GraphSage} models respectively.
In the 2 \name layer architecture, the number of clusters is set as $25\%$ of the number of nodes before applying \name,
while in the 1 \name layer architecture, the number of clusters is set as $10\%$.
Batch normalization \cite{ioffe2015batch} is applied after every layer of \textsc{GraphSage}. 
We also found that adding an $\ell_2$ normalization to the node embeddings at each layer made the training more stable. 
In Section \ref{sec:s2v}, we also test an analogous variant of \name on the \textsc{Structure2Vec} \cite{dai2016discriminative} architecture, in order to demonstrate how \name\ can be applied on top of other GNN models. 
All models are trained for 3\,000 epochs with early stopping applied when the validation loss starts to drop.
We also evaluate two simplified versions of \name:
\begin{itemize}[leftmargin=15pt, topsep=-5pt, parsep=0pt]
 \item \textsc{\name-Det}, is a \name\ model where assignment matrices are generated using a deterministic graph clustering algorithm~\cite{dhillon2007weighted}.
    \item
    \textsc{DiffPool-NoLP} is a variant of \textsc{\name} where the link prediction side objective is turned off.
\end{itemize}

\subsection{Baseline Methods}
In the performance comparison on graph classification, we consider baselines based upon GNNs (combined with different pooling methods) as well as state-of-the-art kernel-based approaches. 

\xhdr{GNN-based methods} 
\begin{itemize}[leftmargin=15pt, topsep=-5pt, parsep=0pt]
    \item \textsc{GraphSage} with global mean-pooling \cite{hamilton2017inductive}. Other GNN variants such as those proposed in \cite{kipf2017semi} are omitted as empirically GraphSAGE obtained higher performance in the task.
    \item \textsc{Structure2Vec} (\textsc{S2V})~\cite{dai2016discriminative} is a state-of-the-art graph representation learning algorithm that combines a latent variable model with GNNs. It uses global mean pooling.
    \item Edge-conditioned filters in CNN for graphs (\textsc{ECC})~\cite{simonovsky2017dynamic} incorporates edge information into the GCN model and performs pooling using a graph coarsening algorithm. 
    \item \textsc{PatchySan}~\cite{niepert2016learning} defines a receptive field (neighborhood) for each node, and using a canonical node ordering, applies convolutions on linear sequences of node embeddings. 
    \item \textsc{Set2Set} replaces the global mean-pooling in the traditional GNN architectures by the aggregation used in \textsc{Set2Set}~\cite{vinyals2015order}. Set2Set aggregation has been shown to perform better than mean pooling in previous work \cite{Gil+2017}. We use \textsc{GraphSage} as the base GNN model. 
    \item  \textsc{SortPool}~\cite{zhang2018end} applies a GNN architecture and then performs a single layer of soft pooling followed by 1D convolution on sorted node embeddings. 
\end{itemize}

\medskip
For all the GNN baselines, we use 10-fold cross validation numbers reported by the original authors when possible. 
For the \textsc{GraphSage} and \textsc{Set2Set} baselines, we use the base implementation and hyperparameter sweeps as in our \name\ approach.
When baseline approaches did not have the necessary published numbers, we contacted the original authors and used \textbf{}their code (if available) to run the model, performing a hyperparameter search based on the original author's guidelines. 

\xhdr{Kernel-based algorithms}
We use the \textsc{Graphlet}~\cite{She+2009}, the \textsc{Shortest-Path}~\cite{Borgwardt2005}, \textsc{Weisfeiler-Lehman} kernel (\textsc{WL})~\cite{She+2011}, and \textsc{Weisfeiler-Lehman Optimal Assignment kernel} (\textsc{WL-OA})~\cite{kriege2016valid} as kernel baselines. For each kernel, we computed the normalized gram matrix. We computed the classification accuracies using the $C$-SVM implementation of \textsc{LibSvm}~\cite{Cha+11}, using 10-fold cross validation. The $C$ parameter was selected from $\{10^{-3}, 10^{-2}, \dotsc, 10^{2},$ $10^{3}\}$ by 10-fold cross validation on the training folds. Moreover, for \textsc{WL} and \textsc{WL-OA} we additionally selected the number of iteration from $\{0, \dots, 5\}$.

\subsection{Results for Graph Classification}\label{sec:classification}
Table \ref{tab:results} compares the performance of \name\ to these state-of-the-art graph classification baselines.
These results provide positive answers to our motivating questions {\bf Q1} and {\bf Q2}:
We observe that our \name approach obtains the highest average performance among all pooling approaches for GNNs, improves upon the base \textsc{GraphSage} architecture by an average of $6.27\%$, and achieves state-of-the-art results on 4 out of 5 benchmarks. 
Interestingly, our simplified model variant, \textsc{\name-Det}, achieves state-of-the-art performance on the \textsc{Collab} benchmark. This is because many collaboration graphs in \textsc{Collab} show only single-layer community structures, which can be captured well with pre-computed graph clustering algorithm~\cite{dhillon2007weighted}.
One observation is that despite significant performance improvement, \name could be unstable to train, and there is significant variation in accuracy across different runs, even with the same hyperparameter setting. It is observed that adding the link predictioin objective makes training more stable, and reduces the standard deviation of accuracy across different runs.
\cut{
Among the baseline methods, the kernel-based \textsc{WL-OA} also performs quite well, achieving the second-best accuracy on the \textsc{Collab} benchmark, which contains exceptionally dense graphs. 
applied on top of \textsc{GraphSage} with GNNs using other pooling methods, as well as kernel-based methods. In the last column we report the percentage gain of each GNN pooling baseline over \textsc{GraphSage} with naive mean pooling.
\textsc{Set2Set} aggregation has shown to give significant gains in many data sets, achieving an average of $3.23\%$ improvement compared to the naive baseline of \textsc{GraphSage} with global pooling. However, \textsc{Set2Set} aggregation has longer running time: it runs $12$ times slower than \name on average.
In comparison, \textsc{PatchySan}, \textsc{SortPool} and \textsc{ClusterPool}  all achieve better results, due to their ability to pool according to structures of the graphs. 
}

\begin{table}[t]\	
\caption{Classification accuracies in percent. The far-right column gives the relative increase in accuracy compared to the baseline \textsc{GraphSage} approach.}
\label{tab:results}
\resizebox{0.93\textwidth}{!}{ \renewcommand{\arraystretch}{1.0}
\centering
\begin{tabular}{@{}clcccccc@{}}\cmidrule[\heavyrulewidth]{2-8}
& \multirow{3}{*}{\vspace*{8pt}\textbf{Method}}&\multicolumn{5}{c}{\textbf{Data Set}}\\\cmidrule{3-8}
& & {\textsc{Enzymes}} & {\textsc{D\&D}} & {\textsc{Reddit-Multi-12k}} & {\textsc{Collab}} & {\textsc{Proteins}} & {\text{Gain}}
\\ \cmidrule{2-8}
\multirow{4}{*}{\rotatebox{90}{\hspace*{-6pt}Kernel}} 
& \textsc{Graphlet}  & 41.03 & 74.85 &  21.73 & 64.66 & 72.91 &  \\ 
& \textsc{Shortest-path} & 42.32 & 78.86 & 36.93 & 59.10  & 76.43 &   \\     
& \text{1-WL} &  53.43 & 74.02 &  39.03 &  78.61 & 73.76 &  \\     
& \text{WL-OA} & 60.13  & 79.04	 & 44.38  & 80.74  & 75.26  &   \\       \cmidrule{2-8}
& \textsc{PatchySan} & -- & 76.27	 & 41.32   & 72.60 &  75.00  & 4.17 \\ 
\multirow{7}{*}{\rotatebox{90}{GNN}} 
& \textsc{GraphSage} &  54.25 & 75.42 	 & 42.24  & 68.25  & 70.48 &  --\\ 
& \textsc{ECC}  & 53.50  & 74.10 & 41.73  & 67.79 &   72.65  & 0.11 \\	
& \textsc{Set2set} &  60.15 & 78.12  & 43.49 & 71.75 & 74.29 & 3.32 \\ 
& \textsc{SortPool} & 57.12 & 79.37  & 41.82 & 73.76  & 75.54  & 3.39 \\     
& \textsc{\name-Det} & 58.33 & 75.47 & 46.18 & \textbf{82.13} & 75.62 & 5.42 \\ 
& \textsc{\name-NoLP} & 61.95  & 79.98	 & 46.65  & 75.58   &  76.22  &  5.95 \\ 
& \textsc{\name} & \textbf{62.53}  & \textbf{80.64}	 & \textbf{47.08}  & 75.48   &  \textbf{76.25}  & \textbf{6.27}\\     
\cmidrule[\heavyrulewidth]{2-8}
\end{tabular}}
\end{table}

\cut{
One notable data set that demonstrates the expressivity of \name is \textsc{Reddit-Multi-12k}, in which each graph represents an online discussion thread between users/nodes (an edge is formed when a user replies to another user). It contains rich hierarchical structures due to the tree-structured nature of Reddit discussions: there are small clusters of discussion among small numbers of users occurring at the leaves of the discussion threads, and the small clusters themselves are grouped into larger clusters based on higher-level threads/topics. 
\name significantly outperforms other methods on this data set, as it can automatically extract meaningful clusters (in a hierarchical fashion) from these natural thread-based graphs. 
}

\cut{
Graphs in the \textsc{Collab} data set, in contrast, are very dense and do not have a hierarchical structure. As illustrated in Section \ref{sec:sparse_dense}, \name tends to assign nodes in densely connected subgraphs into a single cluster. In practice, we observe that hierarchies deeper than two do not result in performance improvement in this data set, which again stresses that it does not contain any hierarchical structure.}

\xhdr{Differentiable Pooling on \textsc{Structure2Vec}}\label{sec:s2v}
\name can be applied to other GNN architectures besides \textsc{GraphSage} to capture hierarchical structure in the graph data.
To further support answering {\bf Q1}, we also applied \name on Structure2Vec (\textsc{S2V}). 
We ran experiments using \textsc{S2V} with three layer architecture, as reported in \cite{dai2016discriminative}.
In the first variant, one \name layer is applied after the first layer of \textsc{S2V}, and two more \textsc{S2V} layers are stacked on top of the output of \name. The second variant applies one \name layer after the first and second layer of \textsc{S2V} respectively. 
In both variants, \textsc{S2V} model is used to compute the embedding matrix, while \textsc{GraphSage} model is used to compute the assignment matrix.

\begin{table}[htbp]\centering		
\caption{Accuracy results of applying \name to \textsc{S2V}.}
\label{tab:results2}
\resizebox{.6\textwidth}{!}{ \renewcommand{\arraystretch}{1.1}
\begin{tabular}{@{}clccc@{}}\cmidrule[\heavyrulewidth]{2-5}
& \multirow{3}{*}{\vspace*{8pt}\textbf{Data Set}}&\multicolumn{3}{c}{\textbf{Method}}\\\cmidrule{3-5}
& & {\textsc{S2V}} & {\textsc{S2V with 1 DiffPool}} & {\textsc{S2V with 2 DiffPool}}
\\ \cmidrule{2-5}
& \textsc{Enzymes}  & 	61.10 & 62.86   & \textbf{63.33}  \\ 
& \textsc{D\&D} & 78.92 & 80.75 &   \textbf{82.07}  \\     
\cmidrule[\heavyrulewidth]{2-5}
\end{tabular}}
\end{table}

The results in terms of classification accuracy are summarized in Table \ref{tab:results2}.
We observe that \name significantly improves the performance of S2V on both \textsc{Enzymes} and \textsc{D\&D} data sets. Similar performance trends are also observed on other data sets.
The results demonstrate that \name is a general strategy to pool over hierarchical structure that can benefit different GNN architectures.

\xhdr{Running time} Although applying \name requires additional computation of an assignment matrix, we observed that \name did not incur substantial additional running time in practice.
This is because each \name layer reduces the size of graphs by extracting a coarser representation of the graph, which speeds up the graph convolution operation in the next layer.
Concretely, we found that \textsc{GraphSage} with \name\ was 12$\times$ faster than the $\textsc{GraphSage}$ model with $\textsc{Set2Set}$ pooling, while still achieving significantly higher accuracy on all benchmarks. 

\subsection{Analysis of Cluster Assignment in \name}
\label{sec:assignment_vis}

\xhdr{Hierarchical cluster structure}
To address {\bf Q3}, we investigated the extent to which \name learns meaningful node clusters by visualizing the cluster assignments in different layers. Figure \ref{fig:assignment_vis} shows such a visualization of node assignments in the first and second layers on a graph from \textsc{Collab} data set, where node color indicates its cluster membership. Node cluster membership is determined by taking the $\argmax$ of its cluster assignment probabilities. We observe that even when learning cluster assignment based solely on the graph classification objective, \name can still capture the hierarchical community structure. We also observe significant improvement in membership assignment quality with link prediction auxiliary objectives.

\xhdr{Dense vs. sparse subgraph structure}
In addition, we observe that \name learns to collapse nodes into soft clusters in a non-uniform way, with a tendency to collapse densely-connected subgraphs into clusters. 
Since GNNs can efficiently perform message-passing on dense, clique-like subgraphs (due to their small diameters) \cite{liao2018graph}, pooling together nodes in such a dense subgraph is not likely to lead to any loss of structural information. 
This intuitively explains why collapsing dense subgraphs is a useful pooling strategy for \name. 
In contrast, sparse subgraphs may contain many interesting structures, including path-, cycle- and tree-like structures, and given the high-diameter induced by sparsity, GNN message-passing may fail to capture these structures. 
Thus, by separately pooling distinct parts of a sparse subgraph, \name can learn to capture the meaningful structures present in sparse graph regions (e.g., as in Figure~\ref{fig:assignment_vis}). 

\xhdr{Assignment for nodes with similar representations}
Since the assignment network computes the soft cluster assignment based on features of input nodes and their neighbors, nodes with both similar input features and neighborhood structure will have similar cluster assignment.
In fact, one can construct synthetic cases where 2 nodes, although far away, have exactly the same neighborhood structure and features for self and all neighbors. In this case the pooling network is forced to assign them into the same cluster, which is different from the concept of pooling in other architectures such as image ConvNets. In some cases we do observe that disconnected nodes are pooled together.

In practice we rely on the identifiability assumption similar to Theorem 1 in GraphSAGE \cite{hamilton2017inductive}, where nodes are identifiable via their features. This holds in many real datasets \footnote{However, some chemistry molecular graph datasets contain many nodes that are structurally similar, and assignment network is observed to pool together nodes that are far away.}. 
The auxiliary link prediction objective is observed to also help discouraging nodes that are far away to be pooled together. Furthermore, it is possible to use more sophisticated GNN aggregation function such as high-order moments \cite{verma2018graph} to distinguish nodes that are similar in structure and feature space. The overall framework remains unchanged.

\xhdr{Sensitivity of the Pre-defined Maximum Number of Clusters}
We found that the assignment varies according to the depth of the network and $C$, the maximum number of clusters. With larger $C$, the pooling GNN can model more complex hierarchical structure. The trade-off is that very large $C$ results in more noise and less efficiency. 
Although the value of $C$ is a pre-defined parameter, the pooling net learns to use the appropriate number of clusters by end-to-end training. 
In particular, some clusters might not be used by the assignment matrix. Column corresponding to unused cluster has low values for all nodes. This is observed in Figure \ref{fig:assignment_vis}(c), where nodes are assigned predominantly into 3 clusters.

\cut{
\xhdr{Number of clusters} \jure{why is this in experiments? We can cut this or make it into an experimental result.}
In addithttps://v2.overleaf.com/projection, although we globally set the number of clusters to be $25\%$ of the nodes, the assignment network automatically learns the appropriate number of meaningful clusters to assign for different graphs in the dataset, in order to optimize the classification objective.
\cut{
\begin{figure}[ht!]
    \centering
    \includegraphics[width=0.8\textwidth]{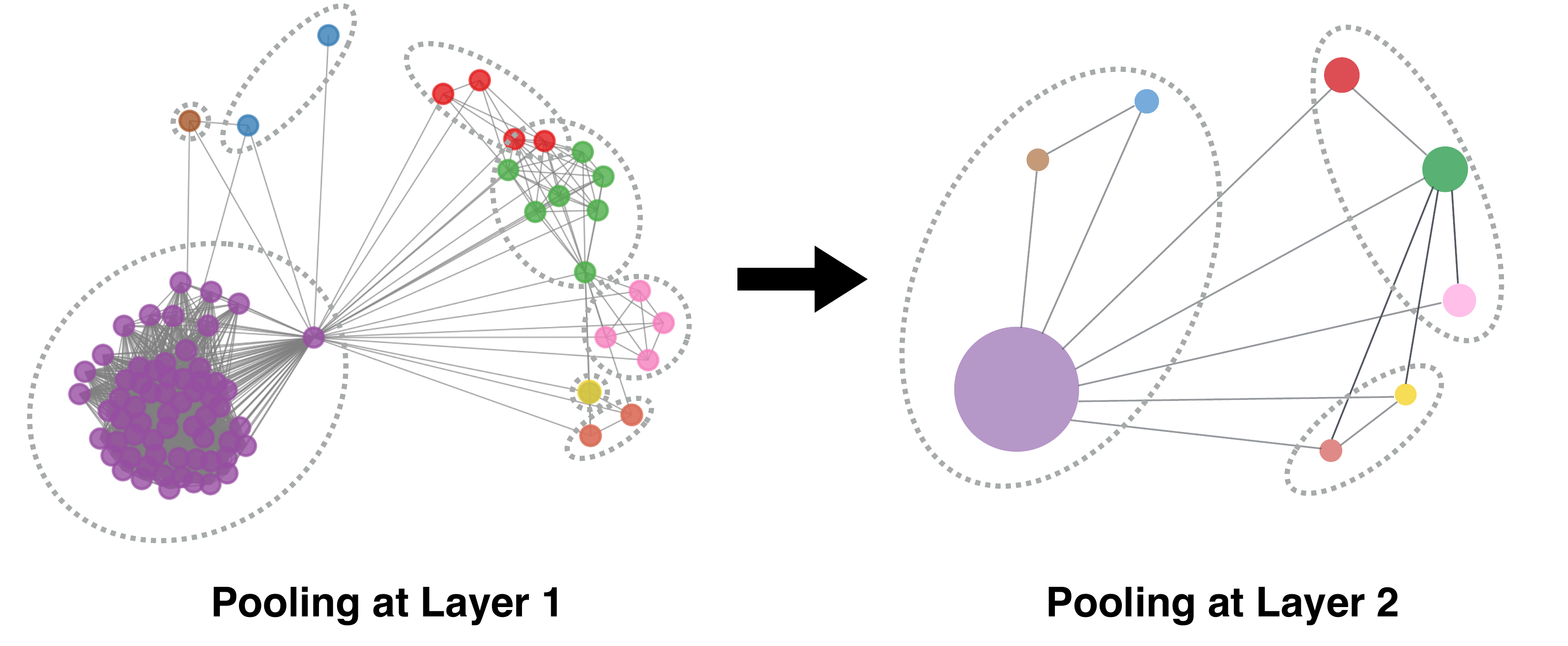}
    \caption{Visualization of hierarchical cluster assignment over two \name\ layers, using an example graph from \textsc{Collab}.
      Nodes in the right-hand side plot correspond to cluster in the original graph (left-hand side). Colors are used to connect the nodes/clusters across the graphs, and dotted lines are used to indicate clusters.}
    \label{fig:assignment_vis}
\end{figure}
}
}

\begin{figure}[t!]
    \centering
    \begin{subfigure}[b]{0.45\textwidth}
        \centering
        \includegraphics[width=1\textwidth]{figs/diffpool_vis.pdf}
        \caption{}
    \end{subfigure}
    ~
    \begin{subfigure}[b]{0.25\textwidth}
        \centering
        \includegraphics[height=1in]{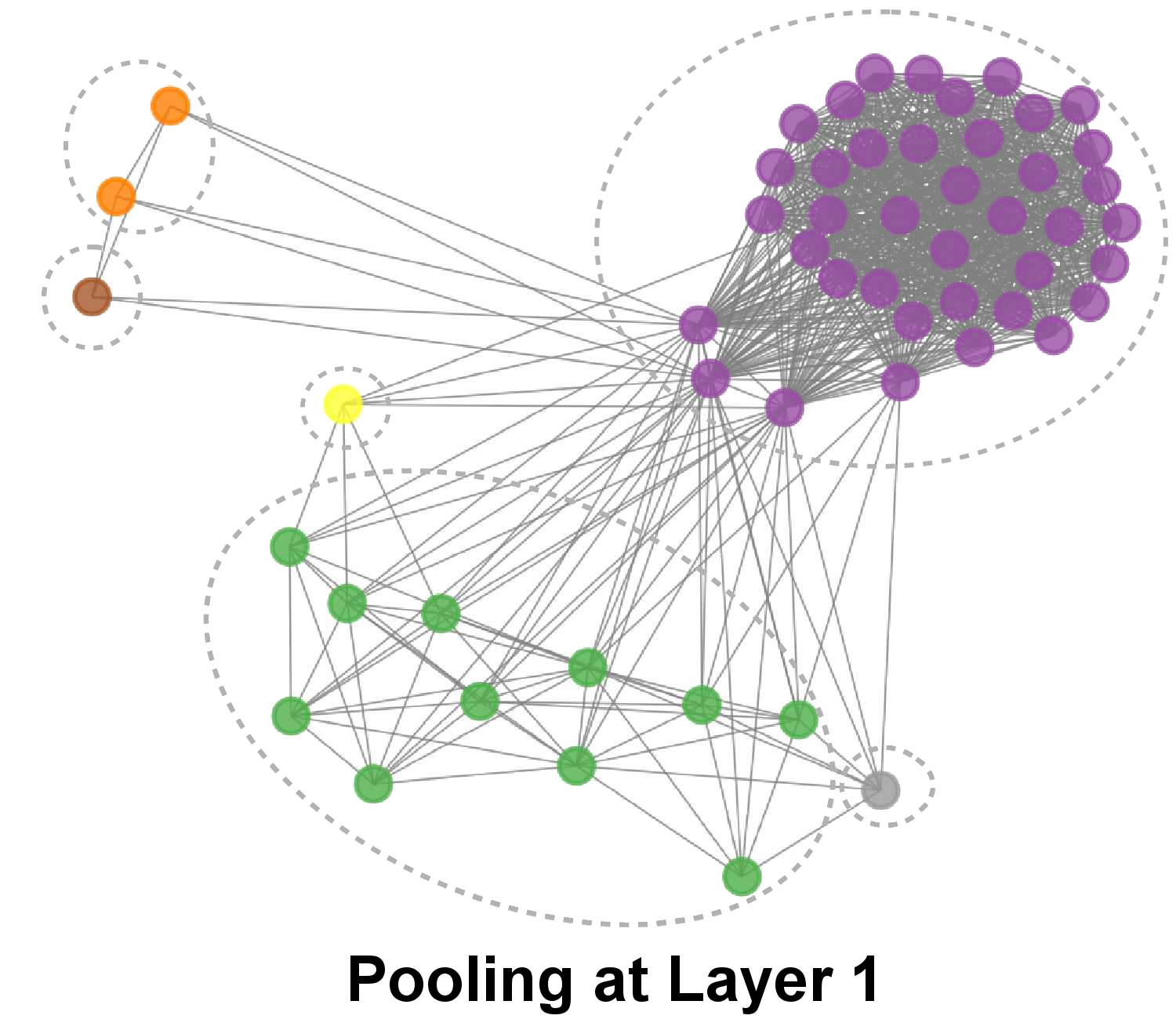}
                \caption{}
    \end{subfigure}%
    ~ 
    \begin{subfigure}[b]{0.25\textwidth}
        \centering
        \includegraphics[height=1in]{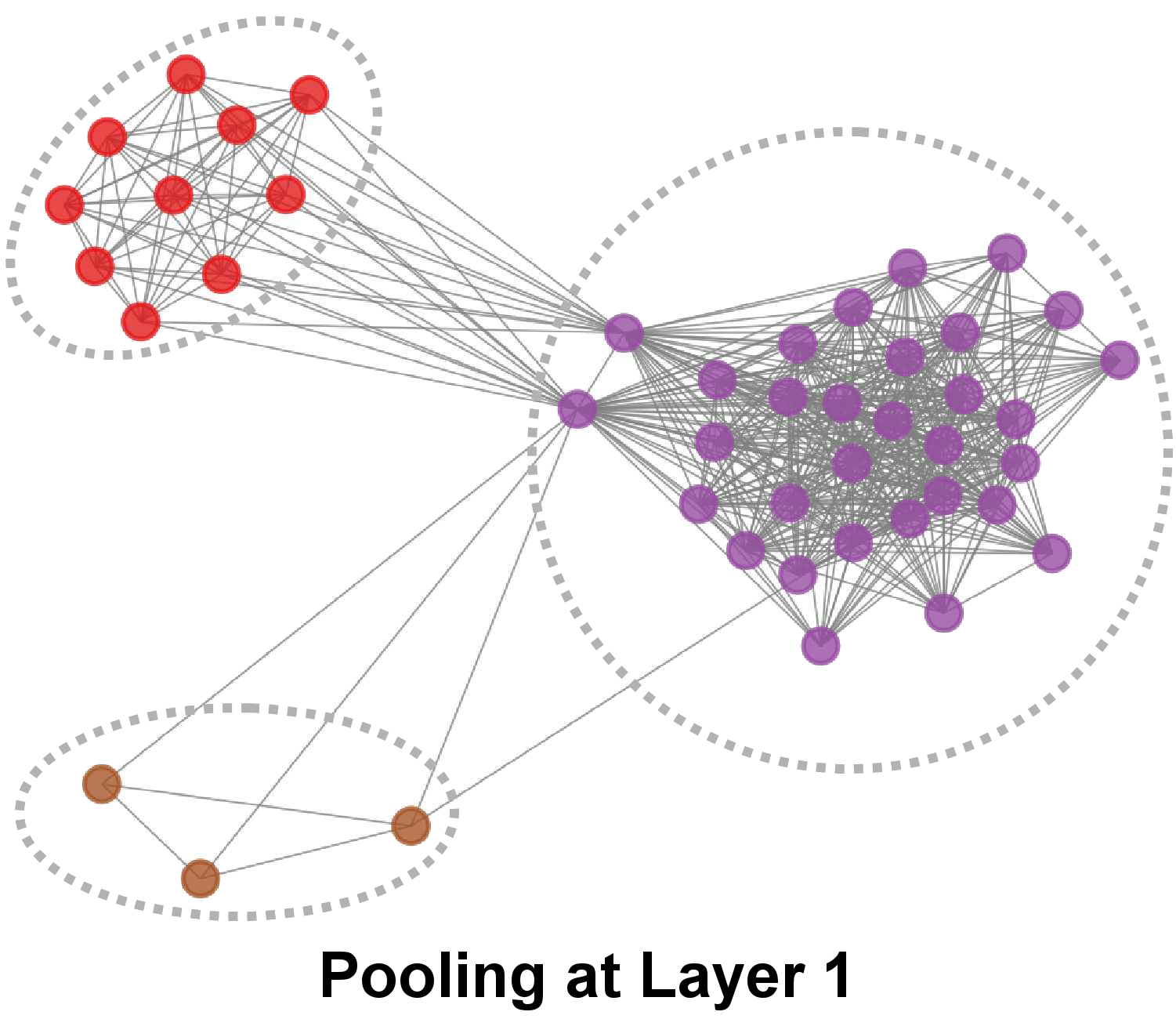}
        \caption{}
    \end{subfigure}
    \caption{Visualization of hierarchical cluster assignment in \name, using example graphs from \textsc{Collab}.
      The left figure (a) shows hierarchical clustering over two layers, where nodes in the second layer correspond to clusters in the first layer. (Colors are used to connect the nodes/clusters across the layers, and dotted lines are used to indicate clusters.)
      The right two plots (b and c) show two more examples first-layer clusters in different graphs. 
      Note that although we globally set the number of clusters to be $25\%$ of the nodes, the assignment GNN automatically learns the appropriate number of meaningful clusters to assign for these different graphs.}
        \label{fig:assignment_vis}
\end{figure}

\section{Conclusion}

We introduced a differentiable pooling method for GNNs that is able to extract the complex hierarchical structure of real-world graphs. By using the proposed pooling layer in conjunction with existing GNN models, we achieved new state-of-the-art results on several graph classification benchmarks. 
Interesting future directions include learning hard cluster assignments to further reduce computational cost in higher layers while also ensuring differentiability, and applying the hierarchical pooling method to other downstream tasks that require modeling of the entire graph structure.

\section*{Acknowledgement}
This research has been supported in part by DARPA SIMPLEX, Stanford Data
Science Initiative, Huawei, JD and Chan Zuckerberg Biohub.
Christopher Morris is funded by the German Science Foundation (DFG) within the Collaborative Research Center SFB 876 “Providing Information by Resource-Constrained Data Analysis”, project A6 “Resource-efficient Graph Mining”. 
The authors also thank Marinka Zitnik for help in visualizing the high-level illustration of the proposed methods.

\bibliography{refs}
\bibliographystyle{abbrv}

\end{document}